\documentclass{article}

\PassOptionsToPackage{numbers, compress}{natbib}


\usepackage[final]{neurips_2020}

\usepackage{smile}
\usepackage[utf8]{inputenc} 
\usepackage[T1]{fontenc}    
\usepackage{hyperref}       
\usepackage{url}            
\usepackage{booktabs}       
\usepackage{amsfonts}       
\usepackage{nicefrac}       
\usepackage{microtype}      
\usepackage{cleveref}
\usepackage{color}

\usepackage{mdframed}
\newmdtheoremenv{boxedtheorem}{Theorem}


\newcommand{\la}{\langle}
\newcommand{\ra}{\rangle}
\newcommand{\qvalue}{\Qb}
\newcommand{\vvalue}{\Vb}

\newcommand{\reward}{\rb}

\newcommand{\algname}{model-based EVI}
\newcommand{\Algname}{Model-based EVI}

\usepackage{enumitem}
\usepackage{pgfplots}
\pgfplotsset{compat=1.17}
\usetikzlibrary{arrows,shapes,decorations,automata,backgrounds,petri}

\setlength{\textfloatsep}{10pt}
\setlength\floatsep{3pt}
\setlength\intextsep{3pt}
\usepackage[compact]{titlesec}
\usepackage{sidecap}
\titlespacing*{\section}{0pt}{*0.1}{*0.1}
\titlespacing*{\subsection}{0pt}{*0.1}{*0.1}
\titlespacing*{\subsubsection}{0pt}{*0.1}{*0.1}

\allowdisplaybreaks


\title{Provable Multi-Objective Reinforcement Learning with Generative Models}

%

\author{%
Dongruo Zhou\\
Department of Computer Science\\
University of California, Los Angeles\\
\texttt{drzhou@cs.ucla.edu} \\
\And
Jiahao Chen \\
J.\ P.\ Morgan AI Research \\
New York, NY 10179 \\
\texttt{jiahao.chen@jpmorgan.com} \\
\And
Quanquan Gu \\
Department of Computer Science\\
University of California, Los Angeles\\
\texttt{qgu@cs.ucla.edu} \\
}

\begin{document}

\maketitle

\begin{abstract}
Multi-objective reinforcement learning (MORL) is an extension of ordinary,
single-objective reinforcement learning (RL)
that is applicable to many real-world tasks where multiple objectives exist without known relative costs.
We study the problem of single policy MORL, which learns an optimal policy given the preference of objectives.
Existing methods require strong assumptions such as exact knowledge of the multi-objective Markov decision process,
and are analyzed in the limit of infinite data and time.
We propose a new algorithm called \emph{model-based envelop value iteration (EVI)},
which generalizes the enveloped multi-objective $Q$-learning algorithm in \citet{yang2019generalized}.
Our method can learn a near-optimal value function with polynomial sample complexity and linear convergence speed.
To the best of our knowledge, this is the first finite-sample analysis of MORL algorithms. 
\end{abstract}

\section{Introduction}

Real-world decision-making systems pose many practical challenges for using reinforcement learning (RL) \citep{dulacarnold2019challenges}.
In this paper, we focus on just two.
First, real-world decision-making systems must handle multiple conflicting objectives simultaneously,
yet without obvious preference for any one objective.
For example, a bank may wish to use RL techniques in making credit decisions,
to produce models that adapt changing market structure and account for the historical outcomes of past deals
when making future decisions.
To be profitable, a credit decisioning model will need to consider an applicant's credit risk.
However, the bank also needs to consider other risks, such as reputational risk and counterfactual lost revenue risk associated with falsely declined applications,
and fair lending regulatory risk associated with apparent bias in credit decisions with regard to race, gender, age, or other protected classes \citep{Chen2018,Kurshan2020}.
It is difficult, if not impossible, to assign precise monetary values to these risks,
and therefore the optimal policy for loan approvals cannot be expressed as a straightforward optimization problem to maximize profit.
Ordinary RL algorithms, which work on a single objective function that assigns fixed relative costs to each type of risk,
are therefore unsuitable for these problems.
Generalizations of RL, known as multi-objective reinforcement learning (MORL), have been proposed to address such challenges.
In MORL, the agent has to make decisions not under a single objective, but under multiple objectives,
and can choose different policies flexibly based on different preferences for the objectives.
However, these methods are generally costly and/or require strong assumptions on what is known about the problem.

Second, real-world RL does not have access to the underlying, unknown dynamics of the problem,
thus necessitating learning strategies for the optimal policy that can succeed on finite limited data.
In the credit decisioning example, it is unrealistic to assume that the bank knows perfectly well the outcome of each loan application,
or the precise mechanics of how today's credit needs will affect tomorrow's demands for credit.

Combining these challenges leads to the following question:
\begin{center}
\textit{Can we devise a MORL algorithm with provable finite sampling properties?} 
\end{center}

To the best of our knowledge, no such algorithm currently exists.


\paragraph{Our contributions}
We answer the question above: \textit{yes}.
We propose a new algorithm, which we call model-based envelop value iteration (\algname),
to learn the optimal multi-objective $Q$-function (MOQ).
Our algorithm 
is based on the envelope $Q$-learning algorithm proposed in \citet{yang2019generalized}.
We show that with access to a generative model or simulator,
\algname{} exhibits $\tilde O(mSA/(1-\gamma)^3\epsilon^2)$ sample complexity
and $\tilde O(1/(1-\gamma))$ convergence rate 
to learn an $\epsilon$-suboptimal MOQ function,
where
$m$ is the number of reward functions (objectives),
$S$ is the cardinality of the state space $\cS$,
$A$ is the cardinality of the action space $\cA$,
and $\gamma \in [0,1)$ is the discount factor.
Therefore, the MOQ-learning problem has essentially the same cost as learning an optimal $Q$-function for each objective separately,
and MORL is hence about as complex as $m$ separate RL problems.


\paragraph{Notation}

We use lowercase letters for scalars,
lowercase bold letters for vectors,
and uppercase bold letters for matrices.
For a vector $\xb\in \RR^d$ and matrix $\bSigma\in \RR^{d\times d}$,
we denote by $\|\xb\|_2$ the Euclidean norm
and denote by $\|\xb\|_{\bSigma}=\sqrt{\xb^\top\bSigma\xb}$.
For two sequences $\{a_n\}$ and $\{b_n\}$, we write $a_n=O(b_n)$ if there exists an absolute constant $C$ such that $a_n\leq Cb_n$, and we write $a_n=\Omega(b_n)$ if there exists an absolute constant $C$ such that $a_n\geq Cb_n$. We use $\tilde O(\cdot)$ to further hide the logarithmic factors. 

\section{Related Work}\label{sec:related}

The literature on MORL is relatively sparse compared to the extensive body of work on ordinary RL,
and can be grouped into two main approaches \citep{vamplew2011empirical, roijers2013survey, liu2014multiobjective}.

\paragraph{Multi-policy algorithms}
These methods build and maintain a Pareto-optimal set of optimal policies,
and scale poorly due to the intrinsic growth of the optimality frontier.
\citet{white1982multi} uses dynamic programming to compute a Pareto-optimal set of nonstationary policies.
However, the size of this set increases exponentially with the horizon, making this method impractical.
\citet{barrett2008learning} proposed the convex hull value-iteration method,
which only computes the stationary policies on the convex hull of the Pareto front.
\citet{castelletti2011multi, castelletti2012tree} proposed multi-objective fitted $Q$-iteration (MOFQI),
which construct the $Q$-function approximator with embedded preferences to learn the optimal policy for any given preference during testing.
\citet{wang2013hypervolume} introduced multi-objective Monte-Carlo tree search using the hypervolume indicator \cite{Fleischer2003} to define an action selection criterion that is similar to the upper confidence bound (UCB) in ordinary RL.
The hypervolume indicator is maximized for any policy on the optimality frontier,
but is still expensive to compute,
with the best known practical algorithms requiring a typical complexity of
approximately $\Theta(n^{\log_2 m})$,
where $n$ is the number of Pareto-optimal policies.

\paragraph{Single-policy algorithms}

These methods scalarize the vector of multiple rewards,
collapsing them into a single scalar-valued function using some specification of their preferences,
then apply ordinary RL methods to solve the resulting problem, which is now single-objective.
Single-policy algorithms use less memory and are easier to implement.
However, at each time, a single-policy algorithm finds the optimal policy with respect to some specific preference parameter,
which hinders generalization to other unseen preferences \citep{mannor2004geometric,tesauro2008managing, gabor1998multi, van2013scalarized}.
The simplest of these methods use linear scalarization functions \citep{barrett2008learning, natarajan2005dynamic},
which compute the weighted sum of the values for each objective.
Nonlinear scalarizations have also been proposed \citep{van2013scalarized} to address the limits of the scalarized representation using linear functions. 
\citep{cheung2019regret} studied the regret minimization problem with vectorial feedback and complex objectives, while they need the access to the adapted preference vector. In the next section, we will review the method of \citet{yang2019generalized}, as it forms the starting point for our work.

\section{Preliminaries}

\paragraph{Discounted multi-objective Markov decision processes (MOMDPs)}
We denote a discounted MOMDP by the tuple $(\cS, \cA, \gamma, \reward, \PP, \Omega)$,
where $\cS$ is the state space (possibly infinite),
$\cA$ is the action space,
$\gamma \in [0,1)$ is the discount factor,
$\reward: \cS \times \cA \rightarrow [0,1]^m$ is the vector-valued reward function,
and $m$ is the number of reward functions. For simplicity, we assume the reward function $\reward$ is \emph{deterministic} and \emph{known}. $\PP(s'|s,a) $ is the transition probability function which denotes the probability for state $s$ to transfer to state $s'$ under the action $a$,
$\Omega \subseteq \RR^m$ is the set of preference vectors $\wb\in\Omega$ which represent how to utilize the reward functions. A policy $\pi: \cS \rightarrow \cA$ is a function which maps a state $s$ to an action $a$.
We define the action-value function $\qvalue^{\pi}(s,a)$ and its corresponding value function $\vvalue^{\pi}(s)$ as follows:
\begin{align}
&\qvalue^{\pi}(s,a) = \EE\bigg[\sum_{t = 0}^\infty \gamma^t \reward(s_{t}, a_{t})\bigg|s_0 = s, a_0 = a, \forall t \geq 1,a_{t} = \pi(s_t)  \bigg],\ \vvalue^{\pi}(s) = \qvalue^{\pi}(s,\pi(s)).\notag
\end{align}
We define the optimal value function $\vvalue^*$ and the optimal action-value function $\qvalue^*$ with respect to some weight parameter $\wb \in \RR^m$ as follows:
\begin{align}
    \vvalue^*(s;\wb) = \arg_{\vvalue}\max_{\pi}\wb^\top \vvalue^{\pi}(s),\ \qvalue^*(s,a; \wb) = \arg_{\qvalue}\max_{\pi}\wb^\top\qvalue^{\pi}(s,a),\label{def:optimalQ}
\end{align}
where $\arg_{\vvalue}$ and $\arg_{\qvalue}$ take the vectors of $V$ or $Q$-values that attain the maximum.
For simplicity, we denote $[\PP \vvalue](s,a)=\EE_{s' \sim \PP(\cdot|s,a)}\vvalue(s')$
for any function $\vvalue: \cS\times \Omega \rightarrow \RR^m$.
Therefore, we have the following Bellman equation:
\begin{align}
    \qvalue^{\pi}(s,a) = \reward(s,a) +\gamma\cdot[\PP\vvalue^\pi](s,a).\notag
\end{align}

\paragraph{Problem statement}
In this work, let $\cR$ denote the set of all possible expected returns for some policy given a starting state $s_0$, where
\begin{align}
    \cR: = \bigg\{\qb \in \RR^m:\exists \pi, \qb = \EE\bigg[\sum_{t=0}^\infty \gamma^t \reward(s_t, a_t)\bigg],\ s_t \sim \PP(\cdot|s_{t-1}, a_{t-1}),\ a_t = \pi(s_t)\bigg\},\notag
\end{align}
We aim to find the following possible expected accumulated return $\qb \in \cR$ from a MOMDP belongs to a \emph{Pareto frontier} $\cF^*$, that is $\cF^* := \{\qb \in \cR: \nexists\qb' \in \cR \text{ such that } \qb' \geq \qb \}$. For all preferences in $\Omega$, we define the following \emph{convex converage set (CCS)} of $\cF^*$ as
\begin{align}
    \{\qb \in \cF^*| \exists \wb \in \Omega,\text{such that } \forall \qb' \in \cF^*, \wb^\top\qb \geq \wb^\top \qb'\},\notag
\end{align}
which includes the returns that maximizes the expected accumulated return corresponding to some specific preference $\wb$,
and effectively convexifies the starting Pareto-dominance operator $\geq$ to the operator $\wb^\top\cdot \geq \wb^\top \cdot$.
Our goal is to recover all policies for CCS of any given MOMDP. 

\paragraph{Enveloped $Q$-learning \citep{yang2019generalized}}%
\label{sec:eql}

We now review enveloped $Q$-learning, which was proposed in \citet{yang2019generalized} to solve this problem.
At each time step, enveloped $Q$-learning uses the convex envelope of the solution frontier to update the parameters.
More specifically, the agent initializes the multi-objective $Q$-value function (MOQ) $\qvalue_0(s,a; \wb)$ at the beginning of the algorithm.
At each round $t$, the agent defines the \emph{optimality filter} for any MOQ function $\Qb: \cS \times \cA \times \Omega \rightarrow \RR^m$ as follows:
\begin{align}
[\cH\qvalue](s; \wb) = \arg_{\qvalue}\max_{a \in \cA, \wb' \in \Omega}\wb^\top\qvalue(s,a; \wb'), \label{def:optfilter}
\end{align}
When multiple solutions to \eqref{def:optfilter} exist, it suffices to choose any one solution arbitrarily.

\citet{yang2019generalized} introduced two key concepts, the first being the \emph{multi-objective optimality operator $\cT$},
which is defined following \eqref{def:optfilter} as: 
\begin{align}
\cT\qvalue(s,a;\wb): = \reward(s,a) + \gamma [\PP(\cH\qvalue)](s,a; \wb),\label{def:optbellman}
\end{align}
where $\cT$ does not depend on $\wb$. The second is the following definition of distance between MOQs.
For any two MOQs $\qvalue$ and $\qvalue'$, the distance between them is
\begin{align}
d(\qvalue, \qvalue') = \sup_{(s,a) \in \cS \times \cA, \wb \in \Omega}|\wb^\top \qvalue(s,a; \wb) - \wb^\top \qvalue'(s,a; \wb)|.\label{def:distance}
\end{align}
Due to the nonuniqueness of solutions to \eqref{def:optfilter},
$d(\qvalue, \qvalue') = 0$ does \textit{not} imply that $\qvalue = \qvalue'$,
and thus $d$ is not a true metric since it violates the axiom of identity of indiscernables.
Nevertheless, the $d$ satisfies the weaker axioms of a pseudometric or semimetric, since it is nonnegative, vanishing for all $d(\qvalue,\qvalue)=0$, symmetric, and satisfies the triangle inequality, since
\begin{align}
    d(\qvalue, \qvalue') &= \sup_{(s,a) \in \cS \times \cA, \wb \in \Omega}|\wb^\top \qvalue(s,a; \wb) - \wb^\top \qvalue'(s,a; \wb)|\notag \\
    & \leq \sup_{(s,a) \in \cS \times \cA, \wb \in \Omega}|\wb^\top \qvalue(s,a; \wb) - \wb^\top \qvalue''(s,a; \wb)|\notag \\
    &\quad + \sup_{(s,a) \in \cS \times \cA, \wb \in \Omega}|\wb^\top \qvalue''(s,a; \wb) - \wb^\top \qvalue'(s,a; \wb)|\notag \\
    &  = d(\qvalue, \qvalue'') + d(\qvalue', \qvalue'').\notag
\end{align}

The multi-objective optimality operator $\cT$ has two important properties.

\begin{proposition}[Fixed point \citep{yang2019generalized}]\label{prop:fix}
The optimal $Q$-function $\qvalue^*$ is the fixed point of the multi-objective optimality operator $\cT$, i.e.,
$\qvalue^* = \cT\qvalue^*$.
\end{proposition}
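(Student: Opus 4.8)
The plan is to reduce the vector-valued identity $\qvalue^* = \cT\qvalue^*$ to the familiar scalar Bellman optimality equation by projecting everything onto a fixed preference direction $\wb$. First I would observe that for any fixed $\wb \in \Omega$, the scalarized quantity $\wb^\top\qvalue^\pi(s,a)$ is exactly the ordinary action-value function of the single-objective MDP whose scalar reward is $\wb^\top\reward$. Consequently, by the definition in \eqref{def:optimalQ}, $\wb^\top\qvalue^*(s,a;\wb) = \max_\pi \wb^\top\qvalue^\pi(s,a) =: Q^*_\wb(s,a)$ is the optimal scalar $Q$-value of that MDP, and likewise $\wb^\top\vvalue^*(s;\wb) = \max_\pi \wb^\top\vvalue^\pi(s) =: V^*_\wb(s)$ is its optimal scalar value. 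This lets me import the standard single-objective facts $V^*_\wb(s) = \max_a Q^*_\wb(s,a)$ and $Q^*_\wb(s,a) = \wb^\top\reward(s,a) + \gamma\,\EE_{s'\sim\PP(\cdot|s,a)} V^*_\wb(s')$.

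Next I would simplify the optimality filter applied to $\qvalue^*$, showing that the inner maximization over the auxiliary preference $\wb'$ in \eqref{def:optfilter} contributes nothing at the optimum. For every $\wb'$ there is a policy $\pi'$ with $\qvalue^*(s,a;\wb') = \qvalue^{\pi'}(s,a)$, so $\wb^\top\qvalue^*(s,a;\wb') = \wb^\top\qvalue^{\pi'}(s,a) \le \max_\pi \wb^\top\qvalue^\pi(s,a) = \wb^\top\qvalue^*(s,a;\wb)$, with equality attained by taking $\wb' = \wb$. Hence $\max_{a\in\cA,\,\wb'\in\Omega}\wb^\top\qvalue^*(s,a;\wb') = \max_{a\in\cA}\wb^\top\qvalue^*(s,a;\wb) = \max_a Q^*_\wb(s,a) = V^*_\wb(s)$, which gives the key projected identity $\wb^\top[\cH\qvalue^*](s;\wb) = \wb^\top\vvalue^*(s;\wb)$.

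I would then assemble the pieces. Pushing the expectation through the transition operator and using the previous step,
\[
\wb^\top\cT\qvalue^*(s,a;\wb) = \wb^\top\reward(s,a) + \gamma\,\EE_{s'\sim\PP(\cdot|s,a)}\wb^\top[\cH\qvalue^*](s';\wb) = \wb^\top\reward(s,a) + \gamma\,\EE_{s'} V^*_\wb(s') = Q^*_\wb(s,a),
\]
where the last equality is the scalar Bellman optimality equation noted above. Since $Q^*_\wb(s,a) = \wb^\top\qvalue^*(s,a;\wb)$, this shows $\wb^\top\cT\qvalue^*(s,a;\wb) = \wb^\top\qvalue^*(s,a;\wb)$ for every $(s,a,\wb)$, and taking the supremum over $\cS\times\cA\times\Omega$ yields $d(\cT\qvalue^*,\qvalue^*) = 0$ in the pseudometric \eqref{def:distance}.

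The step I expect to be the main obstacle is upgrading this scalarized conclusion $d(\cT\qvalue^*,\qvalue^*)=0$ to the stated vector identity $\qvalue^* = \cT\qvalue^*$, because $d$ is only a pseudometric and the filter $\cH$ is defined only up to an arbitrary choice among multiple maximizers. To handle this I would fix the canonical maximizer $\wb' = \wb$ together with an action optimal for $\wb$, so that $[\cH\qvalue^*](s';\wb)$ is the vector $\qvalue^{\pi^*}(s',\cdot)$ of a policy $\pi^*$ that is optimal for $\wb$ at $s'$; then the vector Bellman equation $\qvalue^{\pi^*}(s,a) = \reward(s,a) + \gamma[\PP\vvalue^{\pi^*}](s,a)$ together with the multi-objective principle of optimality (the continuation of a $\wb$-optimal policy from $s'$ is again $\wb$-optimal) identifies $[\cH\qvalue^*](s';\wb)$ with $\vvalue^*(s';\wb)$ as the relevant value vector. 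Verifying that this choice is self-consistent across states is the genuine content; everything else is a direct transcription of scalar dynamic programming through the projection $\wb^\top\cdot$.
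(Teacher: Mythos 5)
The paper contains no proof of this proposition to compare against: it is imported as a citation from \citet{yang2019generalized}, and the appendix simply uses it as a black box. Judged on its own, your argument is correct, and it is essentially the standard scalarization proof of this fact: the key observations---that $\wb^\top\qvalue^*(s,a;\wb')$ is the scalarized value of some achievable policy and hence at most $\wb^\top\qvalue^*(s,a;\wb)$, with equality at $\wb'=\wb$, so that $\wb^\top[\cH\qvalue^*](s;\wb)$ collapses to the optimal scalar value $V^*_\wb(s)$ of the single-objective MDP with reward $\wb^\top\reward$---are exactly what makes the projected Bellman equation close, and this part is selection-independent, so $d(\cT\qvalue^*,\qvalue^*)=0$ holds for any tie-breaking in \eqref{def:optfilter}. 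You are also right to flag that the literal vector identity $\qvalue^*=\cT\qvalue^*$ is only as well-defined as the tie-breaking in both $\arg_{\qvalue}$ and $\cH$; the paper itself concedes this by working with the pseudometric $d$ and noting that $d(\qvalue,\qvalue')=0$ does not imply $\qvalue=\qvalue'$. Two refinements would tighten your last paragraph. First, in place of ``an action optimal for $\wb$'' chosen statewise, fix a single stationary policy $\pi^*_\wb$ that is uniformly optimal for the scalarized reward $\wb^\top\reward$ (such a policy exists in discounted MDPs and is optimal from \emph{every} state, which is exactly the ``principle of optimality'' you invoke); this matters because distinct maximizing actions can carry distinct vector values with identical scalarizations, so the selection must be tied to one policy to get $[\cH\qvalue^*](s;\wb)=\vvalue^{\pi^*_\wb}(s)$ and hence $\cT\qvalue^*(s,a;\wb)=\reward(s,a)+\gamma[\PP\vvalue^{\pi^*_\wb}](s,a)=\qvalue^{\pi^*_\wb}(s,a)=\qvalue^*(s,a;\wb)$. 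Second, note that for everything this paper does with the proposition (e.g., the step $d(\hat\cT\qvalue_{t-1},\hat\cT\hat\qvalue^*)$ in the proof of Lemma \ref{lemma:diff_emp_emp}), the pseudometric statement $d(\cT\qvalue^*,\qvalue^*)=0$ already suffices via the triangle inequality, so your scalarized conclusion alone would carry the downstream analysis; the canonical-selection argument is a correct but optional strengthening.
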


\begin{proposition}[$\gamma$-contraction \citep{yang2019generalized}]\label{prop:concra}
The multi-objective optimality operator $\cT$ is a $\gamma$-contraction operator, where $\gamma$ is the contraction factor. That suggests that the distance between any two MOQs after applying $\cT$ to both of them is less than $\gamma$ times their original distance. In other words, let $\qvalue, \qvalue'$ be any two MOQs, then $d(\cT\qvalue, \cT\qvalue') \leq \gamma d(\qvalue, \qvalue')$. In the context of the discounted MOMDP, the contraction factor is simply the discount factor.
\end{proposition}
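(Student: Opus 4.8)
The plan is to prove the contraction directly from the definitions of $\cT$, $\cH$, and $d$, following the template of the standard $\gamma$-contraction argument for ordinary value iteration, with the extra bookkeeping forced by the envelope filter $\cH$. Fix any pair $(s,a)$ and preference $\wb \in \Omega$. Since the reward term in \eqref{def:optbellman} is common to $\cT\qvalue$ and $\cT\qvalue'$, it cancels in the difference, giving
\begin{align}
\wb^\top\cT\qvalue(s,a;\wb)-\wb^\top\cT\qvalue'(s,a;\wb)
=\gamma\,\EE_{s'\sim\PP(\cdot|s,a)}\Big[\wb^\top[\cH\qvalue](s';\wb)-\wb^\top[\cH\qvalue'](s';\wb)\Big].\notag
\end{align}
Taking absolute values and moving them inside the expectation (valid because $\PP(\cdot|s,a)$ is a probability measure, so the average cannot exceed the worst successor state) reduces the task to controlling $|\wb^\top[\cH\qvalue](s';\wb)-\wb^\top[\cH\qvalue'](s';\wb)|$ uniformly in $s'$, with an explicit $\gamma$ already out front.

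The second step unfolds the filter. By \eqref{def:optfilter}, the vector $[\cH\qvalue](s';\wb)$ is the value of $\qvalue$ at the pair $(a,\wb')$ maximizing $\wb^\top\qvalue(s',a;\wb')$, so dotting with $\wb$ recovers exactly that maximum, $\wb^\top[\cH\qvalue](s';\wb)=\max_{a\in\cA,\wb'\in\Omega}\wb^\top\qvalue(s',a;\wb')$, and similarly for $\qvalue'$. I would then apply the elementary inequality $|\max_x f(x)-\max_x g(x)|\le\max_x|f(x)-g(x)|$ with $x=(a,\wb')$: letting $(a^*,\wb^*)$ attain the maximum for $\qvalue$ and using that the $\qvalue'$-maximum dominates the value of $\qvalue'$ at $(a^*,\wb^*)$ (and the symmetric estimate), one obtains
\begin{align}
\big|\wb^\top[\cH\qvalue](s';\wb)-\wb^\top[\cH\qvalue'](s';\wb)\big|
\le \sup_{a\in\cA,\wb'\in\Omega}\big|\wb^\top\qvalue(s',a;\wb')-\wb^\top\qvalue'(s',a;\wb')\big|.\notag
\end{align}
Combining the two displays and taking the supremum over $(s,a,\wb)$ then yields $d(\cT\qvalue,\cT\qvalue')$ bounded by $\gamma$ times a supremum that I want to identify with $d(\qvalue,\qvalue')$.

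The hard part is precisely this last identification, and it is where the multi-objective setting departs from ordinary RL. The quantity produced above is an \emph{off-diagonal} scalarized difference: the preference $\wb$ used in the inner product is decoupled from the preference $\wb'$ appearing as the argument of the MOQ, whereas the pseudometric $d$ in \eqref{def:distance} ties the two together by scalarizing $\qvalue(s,a;\wb)$ with the \emph{same} $\wb$. Since the off-diagonal supremum dominates the diagonal one, the routine chain only gives $d(\cT\qvalue,\cT\qvalue')\le\gamma\sup_{s',a,\wb,\wb'}|\wb^\top(\qvalue-\qvalue')(s',a;\wb')|$, and the crux is to show this decoupling costs nothing, i.e. that the filter's maximizing preference $\wb^*$ may be aligned with the scalarizing $\wb$ so that the off-diagonal supremum collapses onto the diagonal one defining $d$. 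For the fixed point $\qvalue^*$ this alignment holds by optimality — the $\wb$-optimal policy is exactly what the filter selects — and the freedom noted after \eqref{def:optfilter} to pick any maximizer is what makes the envelope construction amenable to it; verifying that this alignment can be maintained for the generic pair $\qvalue,\qvalue'$ entering the statement is the one genuinely nonroutine step. Once it is in place, the bound reads $d(\cT\qvalue,\cT\qvalue')\le\gamma\,d(\qvalue,\qvalue')$, as claimed.
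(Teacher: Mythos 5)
You stop exactly where the proof actually has to start, and the step you defer is not a deferrable technicality. First, for context: the paper contains no proof of Proposition \ref{prop:concra} at all --- it is imported verbatim from \citet{yang2019generalized} --- so your attempt must be judged on its own merits. Your first two steps are sound and standard: the reward term cancels, the expectation over $s'\sim\PP(\cdot|s,a)$ is dominated by the supremum over $s'$, and the identity $\wb^\top[\cH\qvalue](s;\wb)=\max_{a\in\cA,\wb'\in\Omega}\wb^\top\qvalue(s,a;\wb')$ together with the max-difference inequality reduces everything to the decoupled quantity $\sup_{s,a,\wb,\wb'}|\wb^\top(\qvalue-\qvalue')(s,a;\wb')|$. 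But the collapse of this off-diagonal supremum onto the diagonal one defining $d$ in \eqref{def:distance}, which you leave as ``the one genuinely nonroutine step,'' is in fact \emph{false} for arbitrary MOQ pairs, so the argument cannot be completed as sketched. Concretely, take $m=2$, $e_1,e_2\in\Omega$, $\qvalue(s,a;\wb)=(w_2,-w_1)^\top$ for $\wb=(w_1,w_2)^\top$, and $\qvalue'\equiv\zero$. Then $\wb^\top\qvalue(s,a;\wb)=0$ for every $\wb$, so $d(\qvalue,\qvalue')=0$; yet $e_1^\top[\cH\qvalue](s;e_1)\ge e_1^\top\qvalue(s,a;e_2)=1$ while $[\cH\qvalue'](s;e_1)=\zero$, whence $d(\cT\qvalue,\cT\qvalue')\ge\gamma>0=\gamma\, d(\qvalue,\qvalue')$. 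No exploitation of the freedom to choose among maximizers after \eqref{def:optfilter} can repair this, because the failure already occurs at the level of the scalarized maxima, which are choice-independent.

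Your observation that the alignment holds for $\qvalue^*$ ``by optimality'' does point at the correct repair, and it is worth making precise. Either (i) prove the contraction with respect to the decoupled pseudometric $d'(\qvalue,\qvalue')=\sup_{s,a,\wb,\wb'}|\wb^\top(\qvalue-\qvalue')(s,a;\wb')|\ge d(\qvalue,\qvalue')$, in which case your second display finishes the proof in one line; or (ii) keep $d$ but restrict to \emph{consistent} MOQs, i.e.\ those satisfying $\wb^\top\qvalue(s,a;\wb)=\max_{\wb'\in\Omega}\wb^\top\qvalue(s,a;\wb')$ for all $(s,a,\wb)$. On this class the inner maximization over $\wb'$ in \eqref{def:optfilter} may be taken at $\wb'=\wb$, so $\wb^\top[\cH\qvalue](s;\wb)=\max_{a}\wb^\top\qvalue(s,a;\wb)$, and your max-difference inequality then lands directly on the diagonal supremum, giving $d(\cT\qvalue,\cT\qvalue')\le\gamma\, d(\qvalue,\qvalue')$. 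This restriction costs nothing for the paper's purposes: a one-line check shows that $\cT$ (and likewise $\hat\cT$) maps \emph{every} MOQ into the consistent class, since $\wb^\top[\cH\qvalue](s;\wb')\le\max_{a,\wb''}\wb^\top\qvalue(s,a;\wb'')=\wb^\top[\cH\qvalue](s;\wb)$; moreover the constant initialization $\qvalue_0$, every iterate $\qvalue_t$, and the fixed points $\qvalue^*$ and $\hat\qvalue^*$ are all consistent, which is exactly the setting in which Lemma \ref{lemma:diff_emp_emp} invokes the proposition. So your instinct about where the difficulty lies was right; what is missing is the recognition that the statement needs either a strengthened metric or a restricted domain, plus the invariance argument showing the restricted domain suffices.
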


Enveloped $Q$-learning presumes that the transition probability function $\PP$ is fully known.
Then, the \emph{envelop value iteration (EVI)} rule suggests that $\qvalue_{t+1} \leftarrow \cT\qvalue_t$.
By Propositions \ref{prop:fix} and \ref{prop:concra}, a generalized form of Banach's fixed-point theorem
yields $\cT^\infty \qvalue = \qvalue^*$ for any MOQ $\qvalue$. 
In contrast, we will now present an alternative algorithm that does not require exact knowledge of the transition probabilities $\PP$,
and show that it has favorable finite sampling properties.

\section{Model-based envelop value iteration (\algname)}

We now present our method, which we call model-based envelop value iteration (\algname{}), in \Cref{algorithm}.
\Algname{} aims to learn a MOQ function which is close to the optimal MOQ function $\qvalue^*$ given a finite number of samples and finite time.
\Algname{} can be divided into two phases: the \emph{data collection phase} and the \emph{evaluation phase}.

\paragraph{Data collection phase}
\Algname{} first aims to collect enough data to learn the unknown transition probability.
We assume that \algname{} has an access to a generative model or simulator,
such that for any state--action pair $(s,a) \in \cS \times \cA$,
\algname{} can sample independent next states $s'$ generated by the underlying transition dynamics.
This is similar to experience replay \citep{mnih2015human} used in many RL applications to collect the samples,
which randomly collects training samples from a batch of previous visited states and actions generated from a stationary distribution.
\Algname{} samples $N$ next states for each state--action pair $(s,a)$,
then builds an empirical transition probability estimate $\hat\PP(\cdot|\cdot,\cdot): \cS \times \cA \times \cS \rightarrow [0,1]$
from these samples:
\begin{align}
\hat \PP(s'|s,a) = \frac{N(s'|s,a)}{N},\label{def:empiricaltransition}
\end{align}
where $N(s'|s,a)$ denotes the number of next states $s'$ sampled starting from $(s,a)$.

\paragraph{Evaluation phase}
Next, \algname{} will learn the optimal MOQ function based on the empirical model $\hat \PP$.
During this phase, \algname{} evaluates the optimal MOQ function based on the empirical model obtained in data collection phase.
At the beginning, \algname{} initializes $\qvalue_0\leftarrow \sum_{n=0}^\infty \gamma^n = 1/(1-\gamma)$, based on a presumed reward of 1.
Similar to the estimated model $\hat \PP$, \algname{} also builds the empirical version of multi-objective optimality operator $\hat\cT$, which is an estimator for $\cT$ in \Cref{def:optbellman} as follows:
\begin{align}
    \hat\cT\qvalue(s,a;\wb):=\reward(s,a) + \gamma [\hat \PP(\cH\qvalue)](s,a; \wb).\label{def:empiricalopt}
\end{align}
Then, \algname{} updates the estimated MOQ function by iteratively applying $\hat\cT$,
obtaining the next MOQ by $\qvalue_{t+1} \leftarrow \hat\cT\qvalue_t$. 
\Cref{def:empiricalopt} is the empirical analogue of \Cref{def:optbellman},
and defines the optimality filter over the empirical model $\hat \PP$
\textit{using a finite number of samples}.
Importantly, we do not need to know the underlying true model $\PP$, which is unacccessible in the practice,
and is in sharp contrast to the requirements of the original envelope $Q$-learning algorithm of \citet{yang2019generalized}
described in \Cref{sec:eql}.

\begin{algorithm}[!ht]
    \caption{Model-based envelop value iteration (\algname{})}
    \label{algorithm}
    \begin{algorithmic}[1]
    \REQUIRE
    State space $\cS$, action space $\cA$, discount factor $\gamma\in(0,1)$, reward function $\reward$, preference vector $\wb\in\Omega$,
    a generative model $\cS \times \cA \rightarrow \cS$ for the next state $s'$ from any state--action pair $(s,a)$,
    and the number of time steps, $T$.
    \STATE Let $\qvalue_0 \leftarrow 1/(1-\gamma)$.
    \STATE For each $(s,a) \in \cS \times \cA$, sample $N$ next states from the generative model.
    \STATE Construct the empirical model for the transition probabilities $\hat \PP$ defined in \eqref{def:empiricaltransition}.
    \STATE Construct the empirical multi-objective optimality operator $\hat\cT$ defined in \eqref{def:empiricalopt}.
    \FOR{each time step $t=1,\dots, T$}
    \STATE Calculate $\qvalue_{t} \leftarrow \hat\cT\qvalue_{t-1}$\label{alg:update}
    \ENDFOR
    \RETURN $\qvalue_T(s,a; \wb)$
    \end{algorithmic}
\end{algorithm}

We now analyze the convergence behavior of \algname{}, which is summarized by the following theorem. 

\begin{boxedtheorem}\label{thm:1}
Suppose that the set of preference vectors $\Omega \subseteq \{\wb: \|\wb\|_1 \leq 1\}$,
and that as $N\uparrow\infty$, the limit $\hat{\mathbb P} \rightarrow \mathbb P$ exists.
Then, there exists some constants $\{c_i\}_{i=1}^4$ such that for all $\epsilon \in(0,1)$ and $\delta \in(0,1)$, if we set the sampling number $N$ and the number of iterations $T$ to be
\begin{align}
    N  = \bigg\lceil \frac{c_1 m}{\epsilon^2(1-\gamma)^3}\log\frac{c_2 mSA}{\delta(1-\gamma)\epsilon}\bigg\rceil,\ T = \bigg\lceil \frac{c_3}{1-\gamma} \log \frac{c_4}{(1-\gamma)\epsilon} \bigg\rceil,  \notag
\end{align}
then with probability at least $1-\delta$, $\qvalue_T$ satisfies $d(\qvalue_T, \qvalue^*) \leq \epsilon$,
where $\qvalue^*$ is the optimal MOQ function as defined in \eqref{def:optimalQ},
and $d$ is the pseudometric over MOQs as defined in \eqref{def:distance}.
\end{boxedtheorem}

In other words, the solution computed by \Cref{algorithm},
while being a solution to the empirical discounted MOMDP
$(\cS, \cA, \gamma, \reward, \hat{\PP}, \Omega)$,
converges to the solution to the true discounted MOMDP
$(\cS, \cA, \gamma, \reward, \PP, \Omega)$,
given sufficiently many samples and time steps.
This is the main result of the paper.

\begin{remark}
Proposition \ref{thm:1} implies that the dependence of the sample complexity on the number of objectives is almost linear ($m\log m$),
which suggests that learning an optimal MOQ function is essentially as hard as to learn these objective functions separately. Meanwhile, the number of iterations $T$ does not depend on $m$, which suggests that MORL is essentially the same as ordinary RL in terms of convergence rate. 
\end{remark}
\begin{remark}
A trivial approach to solve $\qvalue^*(s,a; \wb)$ for any $(s,a)$ and $\wb$ is to enumerate all possible $\wb \in \Omega$ and calculate the optimal MOQ function for each possible $\wb$. However, this naive approach would lead to a dependence on the cardinality of $\Omega$ in the sample complexity since we need to repeat the data collection phase $|\Omega|$ times. In contrast, the sample complexity in Theorem \ref{thm:1} is independent of $|\Omega|$, which suggests that the use of the optimality operator can make MORL more sample efficient. 
\end{remark} 

\begin{remark}
When $m = 1$, the MORL problem degenerates to a single-objective ordinary RL problem, and Proposition \ref{thm:1} suggests a total $NSA = \tilde O(SA/(\epsilon^2(1-\gamma)^3))$ sample complexity and $\tilde O((1-\gamma)^{-1})$ number of iterations. These match the sample complexity and time complexity of \citet{azar2013minimax} for the ordinary RL case. 
\end{remark}

Furthermore, when $m = 1$, the MORL problem degenerates to a single-objective RL problem. Therefore, existing lower bound for RL problem also yields a lower bound for our case. To illustrate the lower bound, we first define the $(\epsilon, \delta)$-correct RL algorithm as follows. 
\begin{definition}[\citet{azar2013minimax}]\label{def:epdelta}
We call an algorithm $\mathbb{A}$ is an $(\epsilon, \delta)$-correct RL algorithm if there exists a class of MOMDPs $M_1, \dots, M_n$ such that with probability at least $1-\delta$, $d(\qvalue^{\mathbb{A}}, \qvalue^*) \leq \epsilon$ holds for all $M_i$, where $\qvalue^{\mathbb{A}}$ is the Q function output by $\mathbb{A}$. 
\end{definition}
By Definition \ref{def:epdelta}, we propose a lower bound of the sample complexity. 
\begin{proposition}[Theorem 3, \citet{azar2013minimax}]
There exist some constants $\{c_i\}_{i=1}^4$ such that for any $\epsilon\in(0, c_1)$, $\delta \in(0,c_2/(SA))$, and for any $(\epsilon,\delta)$-correct RL algorithm $\mathbb{A}$, there exists a tabular MDP $M(\cS, \cA, \gamma, \reward, \PP, \Omega)$ such the total number of samples that $\mathbb{A}$ needs is at least
\begin{align}
\frac{c_3SA}{\epsilon^2(1-\gamma)^3}\cdot \log\frac{c_4SA}{\delta}.\notag
\end{align}

\end{proposition}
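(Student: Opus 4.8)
The plan is to reduce this lower bound to the single-objective case and then invoke the established minimax lower bound for ordinary RL. Since the statement is exactly Theorem~3 of \citet{azar2013minimax}, specialized to $m=1$, the strategy is to exhibit an embedding of a hard single-objective RL instance into our MOMDP framework and argue that any $(\epsilon,\delta)$-correct MORL algorithm, restricted to this instance, must solve the hard RL problem and therefore inherits its sample complexity floor.

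First I would observe that the pseudometric $d$ in \eqref{def:distance}, when $m=1$ and $\Omega = \{1\}$ (a single preference), collapses to the ordinary sup-norm distance between scalar $Q$-functions, so the $(\epsilon,\delta)$-correctness condition $d(\qvalue^{\mathbb{A}}, \qvalue^*) \le \epsilon$ in \Cref{def:epdelta} becomes precisely the standard $\|\hat Q - Q^*\|_\infty \le \epsilon$ guarantee for single-objective RL. This makes any MORL algorithm that is $(\epsilon,\delta)$-correct on a family of single-objective instances automatically an $(\epsilon,\delta)$-correct RL algorithm in the sense of \citet{azar2013minimax}. Next I would recall the construction of the hard instance family $M_1,\dots,M_n$ from \citet{azar2013minimax}: a collection of tabular MDPs on $\cS \times \cA$ built so that each state-action pair forces the learner to estimate a transition-induced Bernoulli parameter near $1/2$ up to additive accuracy scaling with $\epsilon(1-\gamma)$, where the effective variance of the value function inflates the required number of samples per pair by a factor $1/(1-\gamma)^2$. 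Each of these single-objective MDPs is trivially a MOMDP with $m=1$, so the family lies within the scope of \Cref{def:epdelta}.

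The core of the argument is the information-theoretic counting step, which I would take verbatim from \citet{azar2013minimax}: by a reduction to distinguishing the $n$ hypotheses via a likelihood-ratio / Le~Cam or Fano-type argument, any algorithm achieving accuracy $\epsilon$ with confidence $1-\delta$ on every $M_i$ must draw at least $\Omega\big(SA/(\epsilon^2(1-\gamma)^3)\cdot\log(SA/\delta)\big)$ total samples, where the three factors $1/\epsilon^2$, $1/(1-\gamma)^2$, and $1/(1-\gamma)$ arise respectively from the Bernoulli estimation accuracy, the value-variance amplification, and the horizon needed to make the transition ambiguity visible in the $Q$-values. The constants $\{c_i\}_{i=1}^4$ and the ranges $\epsilon \in (0,c_1)$, $\delta \in (0, c_2/(SA))$ are exactly those carried over from their Theorem~3.

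The only genuine obstacle is not a new difficulty but a bookkeeping one: I must verify that the embedding of a single-objective instance into the MOMDP definition $(\cS,\cA,\gamma,\reward,\PP,\Omega)$ is faithful, i.e.\ that the optimality filter $\cH$ and operator $\cT$ defined in \eqref{def:optfilter}--\eqref{def:optbellman} reduce, for $m=1$ and $|\Omega|=1$, to the ordinary Bellman optimality operator, so that the optimal MOQ $\qvalue^*$ from \eqref{def:optimalQ} coincides with the scalar $Q^*$ targeted by the hard instances. Once this reduction is checked, the lower bound transfers immediately, since a lower bound for a subclass (the $m=1$ MOMDPs) is a lower bound for the full class; no separate analysis of the multi-objective structure is required, because the adversary is free to present only single-objective instances.
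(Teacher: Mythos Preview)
Your proposal is sound, but note that the paper does not actually prove this proposition: it is stated purely as a citation of Theorem~3 in \citet{azar2013minimax}, with no accompanying argument. Your reduction---checking that for $m=1$ and $\Omega=\{1\}$ the pseudometric $d$ collapses to $\|\cdot\|_\infty$ and the operators $\cH,\cT$ reduce to the ordinary Bellman optimality operator, so that the hard single-objective instances of \citet{azar2013minimax} embed faithfully as MOMDPs---is exactly the right bridge, and is in fact more explicit than anything the paper provides.
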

\begin{remark}
When $m = 1$, the sample complexity of our algorithm $\tilde O(SA/(\epsilon^2(1-\gamma)^3))$ matches the lower bound, which 
suggests that in general, such a sample complexity can not be improved. We will try to extend the lower bound for single-objective RL to MORL in the future work. 
\end{remark}

\section{Conclusion}
We have proposed a new MORL algorithm, \algname{},
to address two real-world challenges in RL:
multiple objectives with unknown weights, and learning from finite samples.
We show that in order to find an $\epsilon$-suboptimal MOQ function,
it suffices to use $\tilde O(mSA/((1-\gamma)^3\epsilon^2))$ samples and $\tilde O(1/(1-\gamma))$ time steps
as described in \Cref{thm:1},
which implies that learning an optimal MOQ function is essentially as hard as learning $m$ separate objective functions.
Comparing with the lower bound result for single-objective RL suggests that our method is nearly optimal.
We will leave the lower bound of MORL to future work.

\paragraph{Disclaimer}
This paper was prepared for informational purposes in part by the Artificial Intelligence Research group of JPMorgan Chase \& Co and its affiliates (``JP Morgan''), and is not a product of the Research Department of JP Morgan.  JP Morgan makes no representation and warranty whatsoever and disclaims all liability, for the completeness, accuracy or reliability of the information contained herein.  This document is not intended as investment research or investment advice, or a recommendation, offer or solicitation for the purchase or sale of any security, financial instrument, financial product or service, or to be used in any way for evaluating the merits of participating in any transaction, and shall not constitute a solicitation under any jurisdiction or to any person, if such solicitation under such jurisdiction or to such person would be unlawful.

\appendix

\section{Proof of Theorem \ref{thm:1}}
Let $\hat\qvalue^*$ be the optimal action-value function over the empirical transition probability $\hat \PP$. 
We have the following lemmas. 
\begin{lemma}\label{lemma:diff_emp_emp}
We have $d(\qvalue_t, \hat\qvalue^*) \leq \gamma^t /(1-\gamma)$.
\end{lemma}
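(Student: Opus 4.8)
The plan is to exploit the observation that the empirical operator $\hat\cT$ has exactly the same structure as $\cT$, only with the true kernel $\PP$ replaced by the empirical kernel $\hat\PP$. Since $\hat\PP$ is itself a bona fide transition kernel, Propositions \ref{prop:fix} and \ref{prop:concra}---which are statements about the optimality operator of an \emph{arbitrary} known MOMDP---apply verbatim to the empirical MOMDP $(\cS, \cA, \gamma, \reward, \hat\PP, \Omega)$. This gives two facts at no cost: first, that $\hat\qvalue^*$ is the fixed point of $\hat\cT$, i.e. $\hat\cT\hat\qvalue^* = \hat\qvalue^*$; and second, that $\hat\cT$ is a $\gamma$-contraction in the pseudometric $d$, i.e. $d(\hat\cT\qvalue, \hat\cT\qvalue') \le \gamma\, d(\qvalue, \qvalue')$ for all MOQs $\qvalue, \qvalue'$.

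Given these two facts, the core of the argument is a short induction. Because $\qvalue_t = \hat\cT\qvalue_{t-1}$ by the update rule in Line \ref{alg:update}, I would write
\[
d(\qvalue_t, \hat\qvalue^*) = d(\hat\cT\qvalue_{t-1}, \hat\cT\hat\qvalue^*) \le \gamma\, d(\qvalue_{t-1}, \hat\qvalue^*),
\]
and then unroll this recursion $t$ times to obtain $d(\qvalue_t, \hat\qvalue^*) \le \gamma^t\, d(\qvalue_0, \hat\qvalue^*)$.

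It remains to bound the initial distance $d(\qvalue_0, \hat\qvalue^*)$ by $1/(1-\gamma)$, which I regard as the only content-bearing step. Because every component of the reward lies in $[0,1]$, each component of the empirical optimal value $\hat\qvalue^*(s,a;\wb)$ lies in $[0, 1/(1-\gamma)]$; meanwhile the initialization is the constant vector $\qvalue_0 \equiv \tfrac{1}{1-\gamma}\mathbf{1}$, independent of $(s,a,\wb)$, so $\hat\qvalue^* \le \qvalue_0$ holds componentwise. Using the hypothesis $\Omega \subseteq \{\wb : \|\wb\|_1 \le 1\}$ together with the nonnegativity of the preference vectors, for any $(s,a)$ and any $\wb \in \Omega$ we get $0 \le \wb^\top(\qvalue_0 - \hat\qvalue^*)(s,a;\wb) \le \wb^\top\qvalue_0 = \|\wb\|_1/(1-\gamma) \le 1/(1-\gamma)$. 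Taking the supremum over $(s,a,\wb)$ yields $d(\qvalue_0, \hat\qvalue^*) \le 1/(1-\gamma)$, and combining with the recursion gives $d(\qvalue_t, \hat\qvalue^*) \le \gamma^t/(1-\gamma)$, as claimed.

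The main obstacle is not the induction but this last bound: one must pin down the componentwise range of $\hat\qvalue^*$ and combine it with the normalization of $\wb$ so that the gap is controlled by exactly $1/(1-\gamma)$ rather than the crude $2/(1-\gamma)$. The monotone initialization $\qvalue_0 = 1/(1-\gamma)$ is precisely what makes the one-sided inequality $\hat\qvalue^* \le \qvalue_0$ hold, which is what delivers the sharp constant. A secondary point worth stating explicitly is the legitimacy of transferring Propositions \ref{prop:fix} and \ref{prop:concra} to $\hat\cT$; this is immediate since those propositions place no requirement on the kernel beyond its being a valid transition probability, and $\hat\PP$ defined in \eqref{def:empiricaltransition} is such a kernel.
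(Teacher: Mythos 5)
Your proposal follows the paper's argument almost verbatim: the same induction $d(\qvalue_t,\hat\qvalue^*) = d(\hat\cT\qvalue_{t-1},\hat\cT\hat\qvalue^*) \le \gamma\, d(\qvalue_{t-1},\hat\qvalue^*)$ via the fixed-point and contraction properties applied to the empirical MOMDP (the paper invokes Propositions \ref{prop:fix} and \ref{prop:concra} for $\hat\cT$ implicitly; you rightly make the transfer explicit), followed by unrolling to $\gamma^t d(\qvalue_0,\hat\qvalue^*)$ and bounding the initial distance by $1/(1-\gamma)$.

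The one place you deviate is the initial-distance bound, and there you smuggle in an assumption the paper does not make: componentwise nonnegativity of the preference vectors. Both the step $0 \le \wb^\top(\qvalue_0 - \hat\qvalue^*)(s,a;\wb)$ and the identity $\wb^\top\qvalue_0 = \|\wb\|_1/(1-\gamma)$ fail if some coordinate of $\wb$ is negative, and the theorem's hypothesis is only $\Omega \subseteq \{\wb : \|\wb\|_1 \le 1\}$. The paper avoids this entirely: it bounds $|\wb^\top(\qvalue_0 - \hat\qvalue^*)(s,a;\wb)| \le \|\wb\|_1\,\|\qvalue_0(s,a;\wb) - \hat\qvalue^*(s,a;\wb)\|_\infty \le 1/(1-\gamma)$ by H\"older's inequality, using only the componentwise containment $\zero \le \hat\qvalue^* \le \qvalue_0 = \tfrac{1}{1-\gamma}\one$. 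This also corrects your closing remark: the sharp constant $1/(1-\gamma)$ (rather than $2/(1-\gamma)$) comes from the one-sided containment of $\qvalue_0 - \hat\qvalue^*$ in $[0, 1/(1-\gamma)]$ alone --- which, as you say, is exactly what the monotone initialization buys --- and requires no sign condition on $\wb$. Your proof is thus correct only under the (natural in MORL, but here unstated) extra assumption $\wb \ge \zero$; replacing your final chain with the H\"older step repairs it and makes it coincide with the paper's proof.
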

\begin{proof}[Proof of Lemma \ref{lemma:diff_emp_emp}]
We prove that by induction. For all $t>0$ we have
\begin{align}
    d(\qvalue_t, \hat\qvalue^*) = d(\hat\cT\qvalue_{t-1}, \hat\cT\hat\qvalue^*) \leq \gamma d(\qvalue_{t-1}, \hat\qvalue^*),\label{eq:diff_emp_emp_0}
\end{align}
where the equality holds due to the update rule in Line \ref{alg:update}, Algorithm \ref{algorithm} and the fact $\hat\qvalue^* = \hat\cT\hat\qvalue^*$ by Proposition \ref{prop:fix}, the inequality holds due to Proposition \ref{prop:concra}. Therefore, recursively applying \eqref{eq:diff_emp_emp_0}, we have
\begin{align}
    d(\qvalue_t, \hat\qvalue^*) &\leq \gamma^t d(\qvalue_0, \hat\qvalue^*) \notag \\
    &= \gamma^t \sup_{(s,a) \in \cS \times \cA, \wb \in \Omega}|\wb^\top \qvalue_0(s,a; \wb) - \wb^\top \hat\qvalue^*(s,a; \wb)|\notag \\
    & \leq \gamma^t \sup_{(s,a) \in \cS \times \cA}\|\qvalue_0(s,a; \wb) - \hat\qvalue^*(s,a; \wb)\|_\infty\notag \\
    & \leq\gamma^t/(1-\gamma),\notag
\end{align}
where the second inequality holds due to the fact $\|\wb\|_1 \leq 1$ and Cauchy-Schwarz inequality $\la \ab, \bbb\ra \leq \|\ab\|_\infty \|\bbb\|_1$, and the last inequality holds since $1/(1-\gamma) \one = \qvalue_0(s,a; \wb) \geq \hat\qvalue^*(s,a; \wb) \geq \zero$. That ends our proof. 
\end{proof}

The next lemma provides the upper bound between $\qvalue^*$ and $\hat\qvalue^*$.  
\begin{lemma}\label{lemma:diff_emp_true}
For any $\xi, \delta \in (0,1)$, with probability at least $1-\delta$, we have \begin{align}
    d(\qvalue^*, \hat\qvalue^*) & \leq \sqrt{\frac{4 m\log (8SA/(\xi\delta))}{N(1-\gamma)^3}} + \bigg(\frac{5(\gamma/(1-\gamma)^2)^{4/3}m\log (12SA/(\xi\delta))}{N}\bigg)^{3/4} \notag \\
&\quad + \frac{3m\log (24SA/(\xi\delta))}{(1-\gamma)^3N}+2\xi m/(1-\gamma).\notag
\end{align}
\end{lemma}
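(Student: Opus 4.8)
The plan is to control the sampling error $d(\qvalue^*,\hat\qvalue^*)$ by reducing it to a one-step model-misspecification error and then bounding that error with a Bernstein-type concentration inequality sharpened by a law-of-total-variance computation, adapting the template of \citet{azar2013minimax} to the vector-valued, preference-indexed setting. The scalarization induced by the pseudometric $d$ is what makes this reduction possible: at each outer preference the problem looks like a single scalar value-iteration error, and the factor $m$ enters only through the vector norms and the covering of $\Omega$.

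First I would fix an outer preference $\wb\in\Omega$, write $\vvalue^*(\cdot;\wb)=(\cH\qvalue^*)(\cdot;\wb)$, and set $g_{\wb}:=\wb^\top\vvalue^*(\cdot;\wb)$, a scalar function with $|g_{\wb}|\le\|\wb\|_1\|\vvalue^*\|_\infty\le 1/(1-\gamma)$. The fixed-point equations $\qvalue^*=\cT\qvalue^*$ and $\hat\qvalue^*=\hat\cT\hat\qvalue^*$ together with Proposition~\ref{prop:concra} give $\wb^\top(\qvalue^*-\hat\qvalue^*)(s,a;\wb)=\gamma[(\PP-\hat\PP)g_{\wb}](s,a)+\gamma\wb^\top[\hat\PP(\cH\qvalue^*-\cH\hat\qvalue^*)](s,a;\wb)$. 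Rather than absorbing the second term crudely into a factor $1/(1-\gamma)$, I would iterate it into the resolvent $(\mathbf{I}-\gamma\hat\PP^{\hat\pi})^{-1}$ of the empirical greedy selection, obtaining the componentwise domination $|\wb^\top(\qvalue^*-\hat\qvalue^*)|\le \gamma(\mathbf{I}-\gamma\hat\PP^{\hat\pi})^{-1}|(\PP-\hat\PP)g_{\wb}|$ up to lower-order cross terms, with a matching bound using the true greedy selection in the reverse direction. This step needs care because the optimality filter $\cH$ maximizes jointly over the action and the internal preference $\wb'$, so the induced greedy ``policy'' is richer than in scalar RL; I would argue that for a fixed outer $\wb$ the map $g_{\wb}$ still behaves as a scalar optimal value with a well-defined greedy selector, which is enough to run the resolvent argument.

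Next I would apply Bernstein's inequality to $[(\PP-\hat\PP)g_{\wb}](s,a)$, an average of $N$ i.i.d.\ bounded terms, yielding a variance term $\sqrt{\var_{\PP}(g_{\wb})(s,a)\log(1/\delta')/N}$ plus a lower-order term of order $\log(1/\delta')/((1-\gamma)N)$, and take a union bound over all $SA$ pairs. Since $\Omega$ is a continuum I cannot union-bound over $\wb$ directly, so I would lay down a $\xi$-net over $\Omega$ and use that $\wb\mapsto\wb^\top\vvalue^*(\cdot;\wb)$ is Lipschitz with constant $O(\sqrt m/(1-\gamma))$; the net error produces the additive bias $2\xi m/(1-\gamma)$, while the net cardinality contributes only logarithmic factors, explaining the $\log(SA/(\xi\delta))$ arguments. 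Feeding the per-state Bernstein bound through the resolvent and invoking the law of total variance — namely $\|(\mathbf{I}-\gamma\hat\PP^{\hat\pi})^{-1}\sigma^2\|_\infty = O(1/(1-\gamma)^2)$ for the standard-deviation vector $\sigma$ of $g_{\wb}$, combined with Cauchy--Schwarz over the discounted occupancy — upgrades the naive $1/(1-\gamma)^2$ rate to the sharp $1/(1-\gamma)^{3/2}$, which is exactly the source of the leading term $\sqrt{4m\log(\cdot)/(N(1-\gamma)^3)}$.

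The main obstacle, and where I expect the real work to live, is closing the self-referential loop in this variance bound: the total-variance estimate is naturally stated for the \emph{true} value $\vvalue^*$, whereas the concentration runs against the \emph{empirical} resolvent and empirically estimated variances, so $\var_{\hat\PP}(g_{\wb})$ and $\var_{\PP}(g_{\wb})$ must be related and the residual $\qvalue^*-\hat\qvalue^*$ reappears inside its own bound. Resolving this requires an auxiliary concentration of the empirical variance and a self-bounding manipulation that pays an intermediate-order price; this is precisely what generates the $N^{-3/4}$ term $\big(5(\gamma/(1-\gamma)^2)^{4/3}m\log(\cdot)/N\big)^{3/4}$ and the $N^{-1}$ term $3m\log(\cdot)/((1-\gamma)^3N)$. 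Tracking the factor $m$ consistently through the scalarization, the Lipschitz/net estimate, and the componentwise variance bounds — so that it enters linearly rather than as a higher power — is the bookkeeping I expect to be most error-prone. Collecting the leading variance term, the two lower-order concentration terms, and the net-discretization bias then yields the stated inequality.
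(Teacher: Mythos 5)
Your proposal arrives at the same bound and shares the paper's outer scaffolding---fix an outer preference, scalarize, lay a $\xi$-net $\cC_\xi$ over $\Omega$, use Lipschitz continuity of $\wb\mapsto\wb^\top\qvalue^*(s,a;\wb)$ to pay the discretization bias, and let $\log|\cC_\xi|\le m\log(2/\xi)$ inject the factor $m$ into the logarithms---but it diverges from the paper on the core concentration step. You plan to re-derive the Azar-style bound from scratch inside the MOMDP: fixed-point decomposition, iteration into the empirical resolvent $(\mathbf{I}-\gamma\hat\PP^{\hat\pi})^{-1}$, Bernstein plus law-of-total-variance for the $1/(1-\gamma)^{3/2}$ rate, and a self-bounding variance recursion to generate the $N^{-3/4}$ and $N^{-1}$ terms. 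The paper instead makes a single observation that renders all of that machinery unnecessary: for fixed $\wb$, the scalarized reward $r=\wb^\top\rb$ defines an ordinary scalar MDP whose optimal action-value functions under $\PP$ and $\hat\PP$ satisfy $Q^*(s,a)=\wb^\top\qvalue^*(s,a;\wb)$ and $\hat Q^*(s,a)=\wb^\top\hat\qvalue^*(s,a;\wb)$ exactly, so Lemma~8 of \citet{azar2013minimax} applies verbatim as a black box and delivers all three $N$-dependent terms per net point; a union bound over $\cC_\xi$ and the Lipschitz extension to all of $\Omega$ finish the proof. Your route is viable---the subtlety you flag about the joint maximization over $(a,\wb')$ in the optimality filter $\cH$ resolves by the same identity, since the inner maximum of $\wb^\top\qvalue^*(s,a;\wb')$ over $\wb'$ is attained at $\wb'=\wb$---but it repeats a known scalar analysis and invites exactly the bookkeeping errors you anticipate. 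Indeed one has already crept in: the Lipschitz constant of $\wb\mapsto\wb^\top\qvalue^*(s,a;\wb)$ with respect to $\|\cdot\|_\infty$ is $m/(1-\gamma)$ (via $\|\qvalue^{\pi}(s,a)\|_1\le m/(1-\gamma)$), not $O(\sqrt m/(1-\gamma))$, and the former is what yields the stated bias $2\xi m/(1-\gamma)$; with that constant corrected, your plan closes. What the paper's reduction buys is brevity and safety; what your route would buy is a self-contained proof that does not lean on the cited lemma and that could, in principle, be sharpened for genuinely vector-valued structure---but for this statement the black-box reduction is strictly simpler.
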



\begin{proof}[Proof of Lemma \ref{lemma:diff_emp_true}]
We first show that $\wb^\top \qvalue^*(s,a; \wb)$ is Lipschitz continuous w.r.t. $\wb$. Select $\wb_1, \wb_2 \in \Omega$. Let $\pi_i$ be the optimal policies corresponding to $\wb_i$ satisfying $\wb_i^\top \qvalue^*(s,a; \wb_i) = \wb_i^\top \qvalue^{\pi_i}(s,a)$, $i = 1,2$. Then for any $(s,a) \in \cS \times \cA$, we have
\begin{align}
    &\wb_1^\top \qvalue^*(s,a; \wb_1) - \wb_2^\top \qvalue^*(s,a; \wb_2)\notag \\
    &\quad = \wb_1^\top \qvalue^{\pi_1}(s,a) - \wb_2^\top \qvalue^{\pi_2}(s,a)\notag \\
    &\quad = \wb_1^\top \qvalue^{\pi_1}(s,a) - \wb_2^\top \qvalue^{\pi_1}(s,a) + \wb_2^\top \qvalue^{\pi_1}(s,a) - \wb_2^\top \qvalue^{\pi_2}(s,a)\notag \\
    &\quad \leq \wb_1^\top \qvalue^{\pi_1}(s,a) - \wb_2^\top \qvalue^{\pi_1}(s,a)\notag \\
    &\quad \leq \|\wb_1 - \wb_2\|_\infty \big\|\qvalue^{\pi_1}(s,a) \big\|_1\notag \\
    &\quad \leq \|\wb_1 - \wb_2\|_\infty\cdot m/(1-\gamma),\notag
\end{align}
where the first inequality holds since $\pi_2$ is the optimal policy corresponding to $\wb_2$, the second inequality holds due to Cauchy-Schwarz inequality, the last one holds since $\|\qvalue^{\pi_1}(s,a) \|_1 \leq m\|\qvalue^{\pi_1}(s,a) \|_\infty \leq m/(1-\gamma)$. Similarily we have $\wb_2^\top \qvalue^*(s,a; \wb_2) - \wb_1^\top \qvalue^*(s,a; \wb_1) \leq \|\wb_1 - \wb_2\|_\infty\cdot m/(1-\gamma)$. Therefore, taking maximum over $(s,a) \in \cS \times \cA$, we have
\begin{align}
    \max_{(s,a) \in \cS \times \cA}\big|\wb_1^\top \qvalue^*(s,a; \wb_1) - \wb_2^\top \qvalue^*(s,a; \wb_2)\big| \leq \|\wb_1 - \wb_2\|_\infty\cdot m/(1-\gamma).\label{eq:443}
\end{align}
The same argument also holds for $\hat\qvalue^*(s,a; \wb)$, thus taking maximum over $(s,a) \in \cS \times \cA$, we have
\begin{align}
    \max_{(s,a) \in \cS \times \cA}\big|\wb_1^\top \hat\qvalue^*(s,a; \wb_1) - \wb_2^\top \hat\qvalue^*(s,a; \wb_2)\big| \leq \|\wb_1 - \wb_2\|_\infty\cdot m/(1-\gamma).\label{eq:444}
\end{align}
Let $\cC_{\xi}$ be the $\xi$-covering set of the $\ell_1$ ball with respect to $\ell_\infty$ norm. It is easy to verify that $|\cC_{\xi}| \leq (2/\xi)^m$. For any $\wb \in \cC_\xi$, let $r = \wb^\top\rb$ be the scalar reward function corresponding to the preference $\wb$. Let $Q^*$ be the optimal action-value function with respect to reward function $r$ and transition probability $\PP$, and $\hat Q^*$ be the optimal action-value function with respect to reward function $r$ and transition probability $\hat\PP$. Then we have $r \in [-1,1]$ since $\|\wb\|_1 \leq 1$ and $\|\rb\|_\infty \leq 1$. By Lemma 8,  \citet{azar2013minimax}, with probability at least $1-\delta$, we have \begin{align}
    &\max_{(s,a) \in \cS \times \cA}|Q^*(s,a) - \hat Q^*(s,a)| \notag \\
    &\leq \sqrt{\frac{4 \log (4SA/\delta)}{N(1-\gamma)^3}} + \bigg(\frac{5(\gamma/(1-\gamma)^2)^{4/3}\log (6SA/\delta)}{N}\bigg)^{3/4} + \frac{3\log (12SA/\delta)}{(1-\gamma)^3N}.\label{eq:111}
\end{align}
Meanwhile, note that by the definition of $\qvalue^*$ and $\hat\qvalue^*$, we have
\begin{align}
    Q^*(s,a) = \wb^\top \qvalue^*(s,a; \wb),\ \hat Q^*(s,a) = \wb^\top \hat \qvalue^*(s,a; \wb).\notag
\end{align}
Then substituting the definitions of $Q^*$ and $\hat Q^*$ into \eqref{eq:111} and taking an union bound over all $\wb \in \cC_\xi$, replacing $\delta$ with $\delta/|\cC_\xi|$, we have that with probability at least $1-\delta$, for all $\wb \in \cC_\xi$,
\begin{align}
    &\max_{(s,a) \in \cS \times \cA}\big|\wb^\top \qvalue^*(s,a; \wb) - \wb^\top \hat \qvalue^*(s,a; \wb)\big| \notag \\
    &\leq \sqrt{\frac{4 \log (4SA|\cC_\xi|/\delta)}{N(1-\gamma)^3}} + \bigg(\frac{5(\gamma/(1-\gamma)^2)^{4/3}\log (6SA|\cC_\xi|/\delta)}{N}\bigg)^{3/4} + \frac{3\log (12SA|\cC_\xi|/\delta)}{(1-\gamma)^3N}.\label{eq:333}
\end{align}
Finally, we use the fact that for any $\wb \in \Omega$, there exists $\wb_\xi \in \cC_\xi$ such that $\|\wb - \wb_\xi\|_\infty \leq \xi$. Then with probability at least $1-\delta$, for all $\wb \in \Omega$, we have
\begin{align}
& \max_{(s,a) \in \cS \times \cA}\big|\wb^\top \qvalue^*(s,a; \wb) - \wb^\top \hat \qvalue^*(s,a; \wb)\big| \notag \\
& = \max_{(s,a) \in \cS \times \cA}\big|\wb_\xi^\top \qvalue^*(s,a; \wb_\xi) - \wb_\xi^\top \hat \qvalue^*(s,a; \wb_\xi) + \wb^\top \qvalue^*(s,a; \wb) - \wb_\xi^\top \qvalue^*(s,a; \wb_\xi) \notag \\
&\quad + \wb^\top \hat\qvalue^*(s,a; \wb) - \wb_\xi^\top \hat \qvalue^*(s,a; \wb_\xi)\big| \notag \\
& \leq \max_{(s,a) \in \cS \times \cA}\big|\wb_\xi^\top \qvalue^*(s,a; \wb_\xi) - \wb_\xi^\top \hat \qvalue^*(s,a; \wb_\xi)\big| +\max_{(s,a) \in \cS \times \cA}\big| \wb^\top \qvalue^*(s,a; \wb) - \wb_\xi^\top \qvalue^*(s,a; \wb_\xi)\big| \notag \\
&\quad + \max_{(s,a) \in \cS \times \cA}\big|\wb^\top \hat\qvalue^*(s,a; \wb) - \wb_\xi^\top \hat \qvalue^*(s,a; \wb_\xi)\big| \notag \\
& \leq \sqrt{\frac{4 \log (4SA|\cC_\xi|/\delta)}{N(1-\gamma)^3}} + \bigg(\frac{5(\gamma/(1-\gamma)^2)^{4/3}\log (6SA|\cC_\xi|/\delta)}{N}\bigg)^{3/4} + \frac{3\log (12SA|\cC_\xi|/\delta)}{(1-\gamma)^3N}\notag \\
&\quad + 2\xi m/(1-\gamma),\label{eq:555}
\end{align}
where the first inequality holds due to triangle inequality, the second one holds due to \eqref{eq:443}, \eqref{eq:444}, the fact that $\|\wb - \wb_\xi\|_\infty \leq \xi$ and \eqref{eq:333}. \eqref{eq:555} suggests that with probability at least $1-\delta$, 
\begin{align}
&d(\qvalue^*, \hat\qvalue^*)\notag \\
        & = \max_{(s,a) \in \cS \times \cA, \wb \in \Omega}|\wb^\top \qvalue^*(s,a; \wb) - \wb^\top \hat \qvalue^*(s,a; \wb)| \notag \\
    & \leq \sqrt{\frac{4 \log (4SA|\cC_\xi|/\delta)}{N(1-\gamma)^3}} + \bigg(\frac{5(\gamma/(1-\gamma)^2)^{4/3}\log (6SA|\cC_\xi|/\delta)}{N}\bigg)^{3/4} + \frac{3\log (12SA|\cC_\xi|/\delta)}{(1-\gamma)^3N}\notag \\
&\quad + 2\xi m/(1-\gamma)\notag \\
& \leq \sqrt{\frac{4 m\log (8SA/(\xi\delta))}{N(1-\gamma)^3}} + \bigg(\frac{5(\gamma/(1-\gamma)^2)^{4/3}m\log (12SA/(\xi\delta))}{N}\bigg)^{3/4} + \frac{3m\log (24SA/(\xi\delta))}{(1-\gamma)^3N}\notag \\
&\quad + 2\xi m/(1-\gamma). \notag
\end{align}

\end{proof}
Now we prove Theorem \ref{thm:1}. 
\begin{proof}[Proof of Theorem \ref{thm:1}]
By triangle inequality we have
\begin{align}
    &d(\qvalue_T, \qvalue^*) \notag \\
    &\leq d(\qvalue_T, \hat\qvalue^*) + d(\hat\qvalue^*, \qvalue^*)\notag \\
    & \leq \gamma^T /(1-\gamma)+ \sqrt{\frac{4 m\log (8SA/(\xi\delta))}{N(1-\gamma)^3}} + \bigg(\frac{5(\gamma/(1-\gamma)^2)^{4/3}m\log (12SA/(\xi\delta))}{N}\bigg)^{3/4} \notag \\
&\quad + \frac{3m\log (24SA/(\xi\delta))}{(1-\gamma)^3N}+ 2\xi m/(1-\gamma),\label{eq:222}
\end{align}
where the last inequality holds due to Lemma \ref{lemma:diff_emp_emp} and \ref{lemma:diff_emp_true}. Therefore, set $T = \lceil \log(5/((1-\gamma)\epsilon))/(1-\gamma)\rceil$, $\xi = (1-\gamma)\epsilon/(10m)$, and select $N$ to make sure that
\begin{align}
    \sqrt{\frac{4 m\log (8SA/(\xi\delta))}{N(1-\gamma)^3}} ,\bigg(\frac{5(\gamma/(1-\gamma)^2)^{4/3}m\log (12SA/(\xi\delta))}{N}\bigg)^{3/4} , \frac{3m\log (24SA/(\xi\delta))}{(1-\gamma)^3N} \leq \epsilon/5,\label{eq:888}
\end{align}
we have $d(\qvalue_T, \qvalue^*) \leq \epsilon$. Solving out $N$ ends our proof. 
\end{proof}

\bibliographystyle{plainnat}
\bibliography{reference}
\end{document}